
\documentclass{article}

\usepackage{microtype}
\usepackage{graphicx}
\usepackage{subfigure}
\usepackage{booktabs} 

\usepackage{hyperref}
\usepackage{amssymb}
\usepackage{amsmath}
\usepackage{amsthm}
\usepackage{framed}

\newtheorem*{assumption*}{\assumptionnumber}
\providecommand{\assumptionnumber}{}
\makeatletter
\newenvironment{assumption}[2]
 {%
  \renewcommand{\assumptionnumber}{Assumption #1-$\mathcal{#2}$}%
  \begin{assumption*}%
  \protected@edef\@currentlabel{#1-$\mathcal{#2}$}%
 }
 {%
  \end{assumption*}
 }
\makeatother


\newtheorem{theorem}{Theorem}[section]
\newtheorem{definition}{Definition}[section]
\newtheorem{lemma}{Lemma}[section]
\newtheorem{corollary}{Corollary}[section]






\newcommand{\uu}{\ensuremath{\mathbf u}}

\DeclareMathOperator*{\argmin}{arg\,min}

\def\FPhat{{\ensuremath{\hat{F^+}}}}
\def\FGhat{{\ensuremath{\hat{F_g}}}}

\newcommand{\K}{\mathcal{K}}

\newcommand{\ignore}[1]{}

\def\reals{{\mathbb R}}
\def\naturals{{\mathbb N}}

\def\bold0{\mathbf{0}}

\newcommand\E{\mbox{\bf E}}

\def\w{\mathbf{w}}

\def\mukappa{\mathbf{(\mu+\kappa)}}
\def\X{\mathbf{X}}
\def\T{\mathbf{T}}
\def\Xplus{\mathbf{X^+}}
\def\Fplus{F^+}
\def\Xdash{\mathbf{X^{'}}}
\def\TFplus{\mathbf{T_{F^+}}}
\def\metricD{\mathbb{D}}
\def\d{\mathbf{d}}

\newcommand{\B}{\mathbb{B}}
\newcommand{\Sp}{\mathbb{S}}

\newcommand{\eps}{\varepsilon}




\usepackage[accepted]{icml2020}

\icmltitlerunning{Data transformation insights in self-supervision with clustering tasks}

\begin{document}

\twocolumn[
\icmltitle{Data transformation insights in self-supervision with clustering tasks}




\begin{icmlauthorlist}
\icmlauthor{Abhimanu Kumar}{to}
\icmlauthor{Aniket Anand Deshmukh}{to}
\icmlauthor{Urun Dogan}{to}
\icmlauthor{Denis Charles}{to}
\icmlauthor{Eren Manavoglu}{to}
\end{icmlauthorlist}

\icmlaffiliation{to}{Bing Ads, Microsoft
,One Microsoft Way, Redmond, WA, USA}


\icmlcorrespondingauthor{Abhimanu Kumar}{abhimank@cs.cmu.edu}

\icmlkeywords{Machine Learning, ICML}

\vskip 0.3in
]



\printAffiliationsAndNotice{}  

\begin{abstract}
Self-supervision is key to extending use of deep learning for label scarce domains. For most of self-supervised approaches data transformations play an important role. However, up until now the impact of transformations have not been studied. Furthermore, different transformations may have different impact on the system. We provide novel insights into the use of data transformation in self-supervised tasks, specially pertaining to clustering. We show theoretically and empirically that certain set of transformations are helpful in convergence of self-supervised clustering. We also show the cases when the transformations are not helpful or in some cases even harmful. We show faster convergence rate with valid transformations for convex as well as certain family of non-convex objectives along with the proof of convergence to the original set of optima. We have synthetic as well as real world data experiments. Empirically our results conform with the theoretical insights provided.

\end{abstract}







\section{Introduction}
Labels are a scarce resource for prediction tasks especially when the labeling needs domain expertise to annotate data points. To address the label scarcity issue many machine learning practitioners have taken self-supervised learning~\cite{yengera2018surgical, Utsumi2019} approach. The idea is to create labels from the data points implicitly during the training phase of learning by learning robust feature representations~\cite{hendrycks2019using}. A common aspect of self-supervision is to add additional transformed data points to aid in the learning phase~\cite{misra2019selfsupervised, tian2019contrastive, ji2018invariant}. This has empirically shown to achieve high accuracy rates for a variety of machine learning tasks such as image recognition ~\cite{taha2018stream, Singh2018SelfSupervisedFL}, clustering~\cite{Caron_2018_ECCV}, classification~\cite{10.1145/1645953.1646072}, few-shot learning~\cite{Gidaris_2019_ICCV}, semi-supervised learning~\cite{rebuffi2019semisupervised}, learning to rank~\cite{pami/LiuWB19} etc. Though, this aspect of self-supervision has not been studied well theoretically or empirically. 

Self-supervision is a promising area of machine learning especially for prediction tasks where getting ground-truth labels is difficult or costly. In a self-supervised method, the learning approach is to get a robust representation learning of the feature set that also helps in implicit labeling~\cite{NIPS2019_9697}. The robust feature representation learnt through this approach can be used as an embedding or other intermediate feature for downstream tasks~\cite{halimi2018selfsupervised}. Data augmentation through transforming the original data points have been found to be helpful in such representation learning tasks~\cite{ji2018invariant, NIPS2017_7175}. There have been very few empirical works~\cite{pal2019hypothesis} and no theoretical work regarding the type of transforms that help in such representation learning schemes. This work is an attempt to fill this gap by providing empirical and theoretical analysis of data transformations applied to self-supervised representation learning for clustering tasks.

We take the clustering problem, solved via gradient descent schemes, as the main setup for our study. We show that certain set of transformations are helpful in convergence of the learning algorithm. Specifically, our contributions are: 
\begin{itemize}
    \item Show that under positive self-supervision, optima remain the same after adding transformed data points to the clustering problem.
    \item Provide empirical and theoretical insight into the rate of convergence for convex clustering loss augmented with transformed data.
    \item Show theoretical and empirical evidence of higher rate of convergence to the optima for a sub-family of non-convex clustering loss under graduated descent. The earlier guarantees have been probabilistic in nature~\cite{pmlr-v48-hazanb16} . 
    \item Provide insights into the varying degrees of contribution of data transforms during different phases of the learning scheme in a clustering task.
    \item Contrast examples of bad transforms that harm the learning process while helpful transforms that aid the learning process.
\end{itemize}
We provide empirical results over synthetic as well as real world datasets under both---convex and non-convex settings. For non-convex settings we use deep learning based clustering method. Our empirical results confirm the theoretical insights we obtain.

\section{Related Work}
\citet{abs-1902-09229} provide theoretical analysis of self-supervision in approaches similar to word2vec embeddings where words closer to target words are treated as ``positive" and words farther are treated ``negative". The implicit positive and negative example words can be understood to be the self-supervised labels.
\citet{pal2019hypothesis} provide empirical analysis of transforms in self-supervised image recognition tasks. They work with the hypothesis that transforms that produce data points that can not be predicted by current data points are helpful compared to transforms that produce data points that can be predicted using existing data points. They provide experimental results for CIFAR10, CIFAR100, FMNIST, and SVHN datasets.
\citet{iccv/0004HG17} propose transitive invariance representation learning by building a graph of visually variant image of same instance. They use a Triplet-Siamese network exploiting this visual-variance graph and apply the learned representations to different recognition tasks.

\citet{tian2019contrastive} take image samples from two different distributions or views, and learn representations by contrasting incogurent and congruent views. They use mutual information as the loss objective and maximize the mutual information between similar views. This methods provides good gains on ImageNet~\cite{imagenet_cvpr09} calssification accuracy as well as on STL10 dataset. 
\citet{ji2018invariant} provide a self-supervised classification approach by learning highly pure semantic clusters through maximizing the mutual information between a given image and its transform. They use common transforms such as cropping, rotation, shear etc. and show that maximizing mutual information between original image and its transform results in semantic clusters since the deep network is trained for robust representation learning. They show good gains on STL, CIFAR, MNIST, COCO-Stuff, and Potsdam datasets setting the state of the art for unsupervised clustering and segmentation.
\citet{taha2018stream} provide a scheme containing two parallel stacked deep learning architecture in a self-supervised setting for action recognition task. They feed the RGB frame to first tower--the spatial tower, and motion stream encoded using stack of differences into second tower--the motion tower. They show good gains on HMDB51, UCF101 and Honda Driving Dataset. 

\citet{NIPS2017_7175} propose image decomposition into underlying intrinsic images (or transforms) and combine them together using reconstruction loss to get robust intermediate representations. This self-supervised scheme is efficient in utilizing large scale unlabeled data for downstream knowledge transfer tasks. \citet{narayanan2018semisupervised} combine self-supervised scheme with small number of labels to get good accuracy for driver behavior detection task. 
DeepCluster~\cite{Caron_2018_ECCV} proposes a scheme where the initial labels for the clustering task come from a k-means run over the data. Then the network learns a feature representation for the clustering task using these labels. This self-supervised learning scheme extracts useful general purpose visual features resulting in better performance on ImageNet classification as well as standard transfer task.

\section{Background and Definitions}
Self-supervised representation learning in clustering is a popular task in machine learning~\cite{pmlr-v101-sun19a, abs-1905-00149, 10.1007/978-3-642-38562-9_26}. The clustering objective can be convex~\cite{NIPS2007_3181, ieee_SSP11} as well non-convex~\cite{ji2018invariant, NIPS2017_7175}. This section provides the mathematical setup for clustering loss and additional definitions to work with in case of non-convex loss.  
\subsection{Problem Setting}
\label{subsec:ProbSetting}
The machine learning task is to cluster the given set of data points $X \in \reals^{N\times d}$ into $k \in \naturals$ clusters under a given distance metric $\metricD$. $F: \reals^{N\times d} \rightarrow \reals$ is the clustering objective function which is to be optimized. $F$ may or may not be convex in the estimated parameters space $W \in \reals^d$. $X_{N\times d} \in \reals^{N\times d}$  are the data points where $d$ is the feature dimension and $N$ is the number of training samples. Given $X$ and $W$, equation~\ref{objctv} below is our objective function to be optimized. 
\begin{equation}
\label{objctv}
F(X,W) = \frac{1}{N}\sum^N_{i=1} f(X_i, W) = \frac{1}{N}\sum^N_{i=1} f_i(W)
\end{equation}
$f_i(W)$ indicates that when optimizing for $W$ each data point $i$ is an additive component of the objective. Following this, the gradient of objective w.r.t. $W$ is of additive form as expressed in equation~\ref{objGrad}. 
\begin{equation}
\label{objGrad}
\nabla_W F(W) = \frac{1}{N}\sum^N_{i=1} \nabla_W f_i(W)
\end{equation}
With addition of transformed data points into the data sample, the objective function takes the form of equation~\ref{objMdfd}, where $g: \reals^d \rightarrow \reals^d$ is the transform function and $N^{'}$ are the number of transformed data points. We define $\frac{1}{N^{'}+N}\sum^{N^{'}}_{j=1} f_{g_j}(W)$ as $F_g(W)$
\begin{eqnarray}
\label{objMdfd}
F^+(X,W) &=& \frac{\big(\sum^N_{i=1} f(X_i, W)+\sum^{N^{'}}_{j=1} f(g(X_j), W)\big)}{N+N^{'}} 
\nonumber\\ 
        &=& \frac{1}{N+N^{'}}\big(\sum^N_{i=1} f_i(W) 
+ \sum^{N^{'}}_{j=1} f_{g_j}(W)\big) \nonumber\\ 
        &=& F(W) + F_g(W)
\end{eqnarray}
Equation~\ref{objMdfdGrad} gives the gradient updates for $F^{+}$. \begin{eqnarray}
\label{objMdfdGrad}
\nabla_W F^+(W) &=& \frac{\big(\sum^N_{i=1} \nabla_W f_i(W) 
+ \sum^{N^{'}}_{j=1} \nabla_W f_{g_j}(W)\big)}{N+N^{'}} \nonumber\\
    &=& \nabla_W F(W) + \nabla_W F_g(W)
\end{eqnarray}
For the clustering task, let $C_i, i \in \{1\ldots k\}$ are the ground truth cluster centroids w.r.t. distance metric $\metricD$. In typical self-supervised clustering schemes~\cite{pmlr-v101-sun19a, abs-1905-00149, 10.1007/978-3-642-38562-9_26} adding transformed data points results in objective of form equation~\ref{objMdfd}.
\subsection{Definitions and Notations}
The following definitions and notations will be primarily used for the non-convex clustering loss setting. The notations and definitions follow~\citet{pmlr-v48-hazanb16}, but are modified appropriately to suit our  self-supervised representation learning setting.
\paragraph{Notation:} We use $\B,\Sp$ to denote the
unit Euclidean ball/sphere in $\reals^d$, and also $\B_r(\w),\Sp_r(\w)$ as the Euclidean $r$-ball/sphere in $\reals^d$ centered at $\w$. For a set $A\subset \reals^d$ , 
$\uu \sim A$ denotes a random variable distributed uniformly over $A$.
\begin{definition}[$\delta$-Smoothed Function]
\label{fpsmoothDef}
Assuming any function $f$ with lipschitz smooth gradient  $||\nabla f(u) - \nabla f(v)||  \leq L||u - v|| ~~\forall u,v \in \reals^d$,  $\delta$-smooth version of $f$ is defined as
$$ \hat{f}_\delta(\w) = \E_{\uu \sim \B } [f(\w + \delta \uu) ]  .$$
\end{definition}
\begin{definition}[GradOp: $\delta$-Smoothed Gradient Operator]
\label{fig:SGO_G}
Given a function $f$, $\w\in \reals^d$, and smoothing parameter $\delta$  $\delta$-Smoothed Gradient Operator for $f$ at $\w$ is defined as
\begin{equation}
\text{GradOp}(f, \w, \delta) = \E_{\uu \sim \B }[\nabla f(\w+\delta \uu)], ~~\uu\sim\B
\end{equation}
\end{definition}


\section{Data Transformation -- Analytical Insights}
Building on the problem setting in the previous section, we analyze the clustering loss with added transformed data points in self-supervised representation learning. The analysis broadly consists of two parts: 1) convex clustering loss setting, and 2) non-convex clustering loss setting. Each of these subsections have specific set of assumptions independent of the other. We first discuss the global assumptions and theorems that are valid for both convex and non-convex settings. 
\begin{assumption}{1}{A}\label{asmptn1a}
\textbf{Positive Self Supervision:} As defined in the section~\ref{subsec:ProbSetting} given distance metric $\metricD$, and  ground truth cluster centroids $C_i, i \in \{1\ldots k\}$ :
\begin{equation}
\label{asmptn1aEqn}
X_i \in C_j \implies g(X_i) \in C_j \forall i,j
\end{equation}
\end{assumption}
Informally, assumption~\ref{asmptn1a} implies that the transformed data point maintains the same class/cluster membership as the original data point.
\begin{assumption}{1}{B}\label{asmptn1b}
\textbf{Unique Global Optima:} If $u = \argmin_W F(W)$ and $v = \argmin_W F(W)$ then
\begin{equation}
\label{asmptn1bEqn}
u = v
\end{equation}
\end{assumption}
Assumption~\ref{asmptn1b} implies that there is only one set of parameters $W^*$ that is the global optima.
\begin{theorem}[Unchanged Optima]
\label{thmUnchngdOptm}
Given assumptions~\ref{asmptn1a}, ~\ref{asmptn1b}, $C_i, i \in \{1\ldots k\}$ are cluster centroids, and  $W^*$ and $W^*_+$ are optima for objectives $F(W)$ and $F^+(W)$ respectively then
\begin{equation*}
W^* = W^*_+
\end{equation*}
\end{theorem}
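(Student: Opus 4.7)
The plan is to exploit the additive decomposition $F^+(W) = F(W) + F_g(W)$ derived in equation~\ref{objMdfd}. If I can show that any minimizer of $F$ is simultaneously a minimizer of $F_g$, then that same point minimizes the sum $F^+$, and uniqueness from Assumption~\ref{asmptn1b} will force $W^*_+ = W^*$.

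First I would formalize how the clustering loss interacts with cluster membership. In the setting of Section~\ref{subsec:ProbSetting}, each per-point loss $f(X_i, W)$ encodes the cost of assigning $X_i$ to a cluster implicitly parameterized by $W$ under the metric $\metricD$, and $W^*$ corresponds to cluster representatives aligned with the ground-truth centroids $C_1,\ldots,C_k$. Next I would invoke Assumption~\ref{asmptn1a} to observe that for every $j$, the transformed point $g(X_j)$ lies in the same centroid class $C_{\sigma(j)}$ as $X_j$. Therefore the augmented sample $\{g(X_j)\}$ shares the same ground-truth centroids as $\{X_i\}$, and the optimum of $F_g(W) = \frac{1}{N+N'}\sum_j f_{g_j}(W)$ is attained at the same parameter configuration whose cluster representatives are $C_1,\ldots,C_k$, namely at $W^*$.

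With that established, both summands of $F^+$ are simultaneously minimized at $W = W^*$, giving
\begin{equation*}
F^+(W^*) = \min_W F(W) + \min_W F_g(W) \leq F^+(W) \quad \forall W,
\end{equation*}
so $W^*$ is a global minimizer of $F^+$. Since $W^*_+$ is also a global minimizer of $F^+$, Assumption~\ref{asmptn1b} (with the minor correction that the second $F$ in its statement should read $F^+$) gives $W^*_+ = W^*$.

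The main obstacle is the step where I claim that $W^*$ minimizes $F_g$: this is the only place where Assumption~\ref{asmptn1a} does real work, and it requires that the per-sample loss $f(\cdot, W)$ depends on the data point only through its cluster-membership relationship to the representatives encoded by $W$. For standard distance-based clustering objectives (e.g.\ $k$-means style losses, or soft-assignment losses under $\metricD$) this is immediate because moving $X_j$ to $g(X_j)$ within the same cluster preserves the minimizing choice of representative. If the authors want the theorem to hold without restricting the functional form of $f$, a cleaner assumption is that $W^*$ is a minimizer of $f(X_i, \cdot)$ for every $i$ in its cluster (so every per-point loss is separately minimized at the ground-truth configuration), in which case the argument above goes through verbatim for $f_{g_j}(\cdot)$ as well.
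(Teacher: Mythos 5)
Your proof is sound at the same level of rigor as the paper's, but it takes a genuinely different route. The paper argues by contradiction entirely at the level of centroids: if $W^* \neq W^*_+$, then the optimal centroids of $F$ and $F^+$ differ (using uniqueness from Assumption~\ref{asmptn1b}), and differing centroids are asserted to contradict Assumption~\ref{asmptn1a}; hence the optima coincide. You instead give a direct argument through the additive decomposition $F^+ = F + F_g$ of equation~\ref{objMdfd}: you show $W^*$ simultaneously minimizes both summands, conclude it minimizes $F^+$, and then invoke uniqueness to identify it with $W^*_+$. The two arguments hinge on the same unproven link --- that preserving cluster membership under $g$ forces the transformed component to be optimized at the same parameter configuration --- but your version localizes that leap to a single, clearly flagged step (``$W^*$ minimizes $F_g$''), whereas the paper's contradiction argument buries it in the assertion that different optima would ``contradict'' Assumption~\ref{asmptn1a}, which as stated is only a condition on cluster membership of transformed points, not on optima. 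Your closing observation --- that the theorem really needs $f(\cdot,W)$ to depend on a data point only through its cluster-membership relation to the representatives encoded by $W$, or alternatively that $W^*$ minimizes each per-point loss --- identifies precisely the hypothesis the paper leaves implicit, and your suggested strengthening would make either proof rigorous. You are also right that Assumption~\ref{asmptn1b} as written only asserts uniqueness for $F$; both your proof and the paper's need it for $F^+$ as well.
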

Theorem~\ref{thmUnchngdOptm} above holds for all positive self-supervised representation learning cases with clustering tasks under assumptions~\ref{asmptn1a} and ~\ref{asmptn1b}. The next subsection provides guarantees for learning under convex loss.
\subsection{Convex Loss Settings}
Under convex loss, self-supervised representation learning for clustering tasks has faster linear rate of convergence when augmenting the learning scheme with transformed data points. To arrive at this insight we make the following assumptions. 
\begin{assumption}{2}{A}\label{asmptn2a}
\textbf{Strong Convexity:} 
\begin{eqnarray}
\label{asmptn2aEq}
F(u)\geq F(v) + <\nabla F(v),u-v> +\frac{\mu}{2}||u-v||^2_2, \nonumber\\ \forall u,v \in \reals^d
\end{eqnarray}
for some $\mu > 0$
\end{assumption}
\begin{assumption}{2}{B}\label{asmptn2b}
\textbf{Lipschitz Smooth Gradient:} 
\begin{eqnarray}
\label{asmptn2bEq}
||\nabla F^+(u) - \nabla F^+(v)||  \leq L||u - v|| \forall u,v \in \reals^d
\end{eqnarray}
for some $L > 0$
\end{assumption}
\begin{assumption}{2}{C}\label{asmptn2c}
\textbf{Strongly Convex Transform:}
\begin{eqnarray}
\label{asmptn2cEq}
F_g(u)\geq F_g(v) + <\nabla F_g(v),u-v> + \frac{\kappa}{2}||u-v||^2_2, \nonumber\\ \forall u,v \in \reals^d
\end{eqnarray}
for some $\kappa > \mu$.
\end{assumption}
Informally, assumption~\ref{asmptn2c} implies that the transformed data points are such that the objective is more strongly convex in $W$ than earlier data points. Lemma~\ref{lem:FPstrong} below is used in the proof for Theorem~\ref{thmCnvxRt}.
\begin{lemma} \label{lem:FPstrong}
Given assumptions ~\ref{asmptn2a} and ~\ref{asmptn2c},  $\Fplus$ is $\mukappa$-strongly-convex.
\end{lemma}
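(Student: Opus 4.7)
The plan is to prove Lemma~\ref{lem:FPstrong} by directly combining the two strong convexity inequalities, leveraging the additive decomposition $F^+(W) = F(W) + F_g(W)$ established in equation~\ref{objMdfd} together with the linearity of the gradient shown in equation~\ref{objMdfdGrad}.

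First, I would write down the strong convexity inequality from Assumption~\ref{asmptn2a} for arbitrary $u,v \in \reals^d$:
\begin{equation*}
F(u) \geq F(v) + \langle \nabla F(v), u-v\rangle + \frac{\mu}{2}\|u-v\|_2^2.
\end{equation*}
Next, I would invoke Assumption~\ref{asmptn2c} for the same pair $u,v$:
\begin{equation*}
F_g(u) \geq F_g(v) + \langle \nabla F_g(v), u-v\rangle + \frac{\kappa}{2}\|u-v\|_2^2.
\end{equation*}

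Then I would sum the two inequalities term by term. On the left-hand side the sum becomes $F(u) + F_g(u) = F^+(u)$ by equation~\ref{objMdfd}. On the right-hand side, the function values combine to $F(v) + F_g(v) = F^+(v)$, the inner product terms combine via linearity to $\langle \nabla F(v) + \nabla F_g(v), u-v\rangle = \langle \nabla F^+(v), u-v\rangle$ by equation~\ref{objMdfdGrad}, and the quadratic terms simply add to $\tfrac{\mu+\kappa}{2}\|u-v\|_2^2$. This yields precisely the strong convexity inequality for $F^+$ with modulus $\mu+\kappa$, which is the claim.

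Since the argument is a one-line addition of the two defining inequalities, I do not anticipate any genuine obstacle; the only thing worth being careful about is making explicit the use of the gradient decomposition~\ref{objMdfdGrad} so that the inner product term in the combined inequality is correctly identified with $\nabla F^+(v)$ rather than left as $\nabla F(v) + \nabla F_g(v)$. Note also that positivity of both moduli ($\mu>0$ from Assumption~\ref{asmptn2a} and $\kappa>\mu>0$ from Assumption~\ref{asmptn2c}) ensures $\mu+\kappa>0$, so the conclusion is a genuine strong convexity statement and not a vacuous convexity one.
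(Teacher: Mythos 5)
Your proposal is correct and follows exactly the paper's own argument: the paper likewise adds the two strong convexity inequalities from Assumptions~\ref{asmptn2a} and~\ref{asmptn2c} and identifies the sums with $F^+$ and $\nabla F^+$ via the decomposition in equation~\ref{objMdfd}. No issues.
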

\begin{theorem}[Faster Convergence]
\label{thmCnvxRt}
Let assumptions~\ref{asmptn2a}, \ref{asmptn2b}, and \ref{asmptn2c} are true.  Given initial parameter $W_0 \in \mathbb{R}^d$ and $\frac{1}{L}\geq \eta > 0$, the iterates 
$$W^{t+1} = W^t - \eta\nabla F^+(W^t),$$ 
converge according to 
\begin{equation}
||W^{t+1}-W^*||^2_2 \leq (1-\eta(\mu+\kappa))^{t+1}||W_0-W^*||^2_2 
\end{equation}
i.e. $\eta =\frac{1}{L}$ the iterates enjoy a linear convergence with a rate of $(\kappa + \mu)/L$ instead of $\mu/L$ for non-transformed objective $F$.
\end{theorem}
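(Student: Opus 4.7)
The plan is to establish a one-step contraction of the form $\|W^{t+1}-W^*\|^2 \leq (1-\eta(\mu+\kappa))\|W^t-W^*\|^2$ and then iterate. First I would note that by Theorem~\ref{thmUnchngdOptm}, $W^*$ is also the optimum of $F^+$ (under assumption~\ref{asmptn1b}), so $\nabla F^+(W^*)=0$. By Lemma~\ref{lem:FPstrong}, $F^+$ is $(\mu+\kappa)$-strongly convex, and by assumption~\ref{asmptn2b} its gradient is $L$-Lipschitz. These are exactly the two inputs needed.

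Next I would expand the squared distance after a gradient step:
\begin{equation*}
\|W^{t+1}-W^*\|^2 = \|W^t - W^*\|^2 - 2\eta \langle \nabla F^+(W^t),\, W^t - W^*\rangle + \eta^2 \|\nabla F^+(W^t)\|^2.
\end{equation*}
From $L$-smoothness together with convexity (applied to $F^+$ at $W^t$ and $W^*$, using $\nabla F^+(W^*)=0$), one gets the co-coercivity-style bound $\|\nabla F^+(W^t)\|^2 \leq L\,\langle \nabla F^+(W^t), W^t - W^*\rangle$. Substituting and collecting terms yields
\begin{equation*}
\|W^{t+1}-W^*\|^2 \leq \|W^t-W^*\|^2 - \eta(2-\eta L)\,\langle \nabla F^+(W^t), W^t-W^*\rangle.
\end{equation*}
For $\eta \leq 1/L$ we have $\eta(2-\eta L) \geq \eta \geq 0$, so I can then lower-bound the inner product using $(\mu+\kappa)$-strong convexity of $F^+$ (again exploiting $\nabla F^+(W^*)=0$), obtaining $\langle \nabla F^+(W^t), W^t-W^*\rangle \geq (\mu+\kappa)\|W^t-W^*\|^2$. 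This gives the one-step contraction $\|W^{t+1}-W^*\|^2 \leq (1-\eta(\mu+\kappa))\|W^t-W^*\|^2$.

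Finally, I would apply the contraction inductively across $t+1$ iterations to obtain the stated geometric rate, and specialize to $\eta = 1/L$ to read off the linear rate $(\mu+\kappa)/L$, contrasting it with the rate $\mu/L$ that the same argument would yield for $F$ alone. The only non-routine step is verifying that Lemma~\ref{lem:FPstrong} really supplies strong convexity with constant $\mu+\kappa$ (and not some weaker constant depending on $N/(N+N')$), since $F^+$ averages $F$ and $F_g$; if the lemma as stated in the paper includes the normalization constants, the arithmetic will carry through cleanly, but this is the one spot where one should double-check bookkeeping before declaring the bound $(1-\eta(\mu+\kappa))$ sharp.
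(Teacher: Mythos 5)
Your proof is correct and follows essentially the same route as the paper's: both expand $\|W^{t+1}-W^*\|_2^2$ after one gradient step and extract the one-step contraction factor $(1-\eta(\mu+\kappa))$ from the $(\mu+\kappa)$-strong convexity of $F^+$ (Lemma~\ref{lem:FPstrong}) together with $L$-smoothness, the only cosmetic difference being that you absorb the $\eta^2\|\nabla F^+(W^t)\|^2$ term via co-coercivity at $W^*$, whereas the paper bounds it by the function-value gap and lets that gap cancel against the cross term. Regarding your closing caveat about normalization: Assumptions~\ref{asmptn2a} and~\ref{asmptn2c} are stated directly for $F$ and $F_g$ as the two summands in the decomposition $F^+=F+F_g$ of equation~\ref{objMdfd}, so Lemma~\ref{lem:FPstrong} simply adds the two inequalities and the constant is exactly $\mu+\kappa$ with no $N/(N+N')$ factor to track.
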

Algorithm~\ref{Alg:SuffixGD} is a typical gradient descent algorithm for  convex loss.
\begin{algorithm}
\caption{GradDescent} 
    \begin{algorithmic}
    \STATE  \textbf{Input}: data points $\X$, total time $\T$, decision set $\K$,  initial point $\w_1 \in \K$, gradient oracle $\text{GradOracle}(\cdot)$
    \STATE set $\eta$ as $\frac{1}{L}\geq \eta > 0$ 
    
    	\FOR { $t=1$ to $\T$} 
	      \STATE Query the gradient oracle at $\w_t$: 
	                   $$g_t \gets \text{GradOracle}(\w_t) $$
	      \STATE Update: 
	      $\w_{t+1} \gets (\w_t-\eta g_t) $
	   
	\ENDFOR
     \STATE \textbf{Return}: $\w_{\T}$ 
    \end{algorithmic}
    \label{Alg:SuffixGD}
   \end{algorithm}
Informally, theorem~\ref{thmCnvxRt} states that adding data transforms that introduce strong convexity lead to faster convergence of algorithm~\ref{Alg:SuffixGD} in positive self-supervised representation learning settings for clustering.
\subsection{Strongly-convex Transform Example}
We take the well studied convex clustering loss~\cite{ieee_SSP11, ICML-2011-HockingVBJ, abs-1810-02677} and use it to illustrate a real world transform that makes the objective more strongly-convex. The loss objective, $\Phi$ is: 
\begin{equation}
\Phi = \min\limits_{w \in \reals^{d\times n}} \frac{1}{2}\big( \sum_{i=1}^{n} \|w_i-x_i\|^2_2 + \gamma\sum_{i,j}\alpha_{i,j} \|w_i-w_j\|_2^2 \big)
\end{equation}
$w_i \in \reals^d$ are the cluster centroids and $x_i\in\reals^d$ are the data. $\alpha_{i,j}\in \reals$ and $\gamma\in\reals$ are the hyperparameters. We show that the new transformed data points' function component has stronger convexity than the original data points' component (equation ~\ref{objMdfd}), and from lemma~\ref{lem:FPstrong} we have the desired stronger-convexity guarantee on the objective. If the hessian of the loss ($\nabla^2\Phi$) has all its eigenvalues ($\lambda$s) bigger than the earlier loss then the objective is more strongly convex~\cite{byod_cvx_book}. $\nabla^2_{i,j} = \partial \phi^2 / \partial w_iw_j = \gamma \sum_{j} \alpha_{i,j}(w_i-w_j)$, and $\nabla^2_{i,i} = \partial \phi^2 / \partial w_i^2 = 1 + \gamma \sum_{j,j\neq i} \alpha_{i,j}$. The hessian can be rewritten as 
\begin{equation}
\label{eqn:hessian}
\nabla^2 = I + A - B
\end{equation}
where  $I$ is identity, $A$ is a diagonal matrix with $A_{i,i} = \gamma\sum_j\alpha_{i,j}$, and $B_{i,j}=\alpha_{i,j}$.
To construct a valid transform let hyperparameter $\alpha$ can have only two values ${\alpha_1, \alpha_2}$ where $\alpha_2 > \alpha_1$. For any existing data point pairs $(x_i, x_j)$, assume $\alpha_{i,j} = \alpha_1$. We choose the transform $g: \reals^d \rightarrow \reals^d$ such that it maps $x_i \rightarrow x_k$ such that $\alpha_{i,k} = \alpha_2$ and for any two transformed pair $(x_k,x_l)$ $\alpha_{k,l} = \alpha_2$. Using equation~\ref{eqn:hessian} we can show that $\big(\nabla^2\Phi_g\big)_{\lambda} \geq \big(\nabla^2\Phi\big)_{\lambda}$ i.e. eigenvalues of $\nabla^2\Phi_g$ are greater than eigenvalues of $\nabla^2\Phi$. This gives us a valid transform.

\subsection{Non-Convex Loss Settings}
For non-convex setting, we only analyze those problem families that have a unique global optima. They can have multiple local optima but not multiple global optima which is formally captured by assumption~\ref{asmptn1b}. The other assumption for this case are listed below.
\begin{assumption}{3}{A}\label{asmptn3a}
\textbf{Lipschitz Smooth Gradient:} 
This is same as assumption~\ref{asmptn2b}, though here function $\Fplus$ and $F$ are non-convex.
$$||\nabla F^+(u) - \nabla F^+(v)||  \leq L||u - v|| ~~\forall u,v \in \mathbb{R^d} $$
\end{assumption}

\begin{assumption}{3}{B}\label{asmptn3b}
\textbf{Strongly-Convex Nice Function with Positive Self-Supervision:} 
We borrow the definition of $\sigma$-Nice functions from ~\cite{pmlr-v48-hazanb16} and extend it to local positive self-supervision settings. We assume that :
\begin{enumerate}
\item  \textbf{Centering property:}  For every $\delta> 0 $, and every $\w^*_\delta \in \argmin_{\w\in\K} \FPhat_{\delta}(\w)$, there exists $ \w^*_{\delta/2} \in \arg\min _{x\in\K}\FPhat_{\delta/2}(\w)$, such that:
$$\|\w^*_\delta - \w^*_{\delta/2}\| \leq \frac{\delta}{2}$$ 
\item  \textbf{Local $\mu$-strong convexity of $\hat{F}$:} For every $\delta> 0 $ let $r_\delta=3\delta$, and $\w^*_\delta = \argmin_{\w\in\K} \hat{F}_{\delta}(\w)$,
  then over $B_{  r_\delta} (\w^*_\delta)$, the function $\hat{F}_{\delta}(\w)$
  is $\mu$-strongly-convex.
  
  \item  \textbf{Local $\kappa$-strong convexity of data-transformed component ($\FGhat$):} For every $\delta> 0 $ let $r_\delta=3\delta$, and  $\w^*_\delta = \argmin_{\w\in\K} \hat{F}_{\delta}(\w)$,
  then over $B_{  r_\delta} (\w^*_\delta)$, the function $\hat{F}_{g_{\delta}}(\w)$
  is $\kappa$-strongly-convex.
  \item \textbf{Locally Positive Self-Supervision:} data-transformed component $\hat{F}_{g_{\delta}}$ obeys assumption~\ref{asmptn1a}.
\end{enumerate}
\end{assumption}

\begin{algorithm}[t]
\caption{$\text{GradOptTransformed}$  }
    \begin{algorithmic}
    \STATE \textbf{Input}:  input data $\X$, decision set $\K$
    \STATE Get the transformed data points $\Xdash$
    \STATE $\Xplus$ = $\{\X, \Xdash\}$
    \STATE  Choose $\w_1 \in \K $ uniformly at random. 
     \STATE  Set $\delta_1 =  \textrm{diam}(\K)$
    	\FOR { $m=1$ to $M$} 
	\STATE // Perform GradDescent over $\FPhat_{\delta_m}$
	\STATE Set shrinked decision set,
	$$\K_{m} : = \K\cap B(\w_m,1.5\delta_m)$$ 
	\STATE Set $$\TFplus = \frac{2\ln{(\delta_m/(4 \textrm{diam}(\K_m))}}{\ln{(1-\eta\mukappa)}} $$
	\STATE Set gradient oracle for $\FPhat_{\delta_m}$,
	 $$\text{GradOracle}(\cdot) =\text{GradOp}(\cdot,\delta_m)$$
	\STATE Update:
	\begin{eqnarray*}
	\w_{m+1} \gets \text{GradDescent}(\Xplus, \TFplus ,\K_m, \w_{m},\\ \text{GradOracle} )
	\end{eqnarray*}
	 \STATE $\delta_{m+1} = \delta_m/2$
    \ENDFOR
    \STATE \textbf{Return}: $\w_{M+1}$
    \end{algorithmic}
   \label{alg:generic}
   \end{algorithm}

\begin{figure}
\begin{center}
\centerline{\includegraphics[width=0.8\columnwidth]{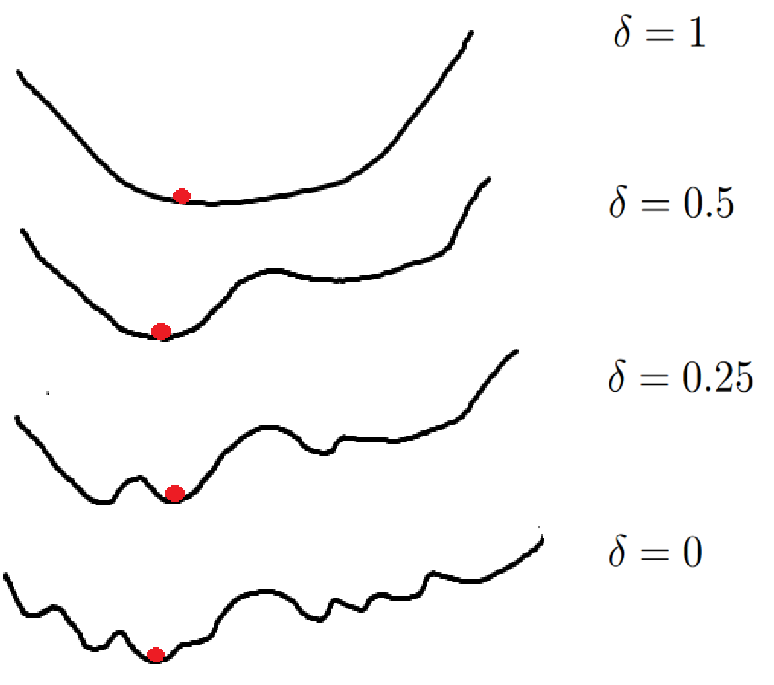}}
\caption{The function graph at different phases of algorithm~\ref{alg:generic} corresponding to different values of $\delta$. Higher $\delta$ leads to more smoothing. The red ball denotes the optima discovered in each outer iteration of algorithm~\ref{alg:generic}.}
\label{fig:garduatedEx}
\end{center}
\end{figure}

\begin{lemma} \label{lem:gradopUnbias}
Assuming $\w\in \reals^d$ , $\delta \geq 0$, and $f$ is $L$-Lipschitz, $\text{GradOp}$ as defined in~\ref{fig:SGO_G} is an unbiased estimate for $\nabla\hat{f}_{\delta}(\w)$.
\end{lemma}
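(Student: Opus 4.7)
The plan is to verify that the gradient operator $\text{GradOp}(f,\w,\delta)$ coincides with $\nabla \hat{f}_\delta(\w)$, i.e.\ that one may exchange the gradient in $\w$ with the expectation over $\uu\sim\B$. Since
\[
\hat{f}_\delta(\w) \;=\; \frac{1}{\vol(\B)}\int_{\B} f(\w + \delta\uu)\,d\uu,
\]
the statement is exactly the claim that $\nabla_\w$ commutes with this integral against the uniform measure on $\B$.

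First I would invoke the Leibniz rule for differentiation under the integral sign (equivalently, apply the dominated convergence theorem to the difference quotients $h^{-1}\bigl(f(\w+h\be_i+\delta\uu)-f(\w+\delta\uu)\bigr)$ for each coordinate direction $\be_i$ as $h\to 0$). For each fixed $\uu\in\B$, the chain rule gives $\nabla_\w f(\w+\delta\uu) = (\nabla f)(\w+\delta\uu)$, which is the pointwise limit of these difference quotients.

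To justify the interchange I need a uniform integrable bound on the integrand and its difference quotients. This is where the hypothesis enters: under the Lipschitz-gradient assumption of Definition~\ref{fpsmoothDef} (and allowing ``$L$-Lipschitz'' in the lemma to refer to $f$ having an $L$-Lipschitz gradient, consistent with the earlier definition), for any $\uu\in\B$ and any $\w'$ in a fixed neighborhood of $\w$,
\[
\|\nabla f(\w'+\delta\uu)\| \;\le\; \|\nabla f(\w)\| + L\bigl(\|\w'-\w\| + \delta\bigr),
\]
which is a \emph{constant} dominator on the ball $\B$ of finite volume. By the mean value inequality, the difference quotients are also bounded by this same constant. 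Hence the hypotheses of dominated convergence are satisfied.

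Applying the interchange yields
\[
\nabla \hat{f}_\delta(\w) \;=\; \E_{\uu\sim\B}\bigl[\nabla f(\w+\delta\uu)\bigr] \;=\; \text{GradOp}(f,\w,\delta),
\]
so $\text{GradOp}$ is an unbiased estimator of $\nabla \hat{f}_\delta(\w)$. The only subtle point is the dominating-function step, but the Lipschitz-gradient hypothesis makes this essentially automatic because the integration domain $\B$ is bounded; the rest is a routine application of Leibniz's rule.
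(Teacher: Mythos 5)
Your proposal is correct and follows essentially the same route as the paper, whose entire proof is the one-line assertion that one may take expectations and differentiate under the integral sign; you simply supply the justification (Leibniz/dominated convergence with a Lipschitz-based dominating constant on the bounded domain $\B$) that the paper omits. Your reading of ``$L$-Lipschitz'' as the Lipschitz-gradient condition of Definition~\ref{fpsmoothDef} is the sensible resolution of the statement's ambiguity and is consistent with how the paper uses the lemma.
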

Lemma~\ref{lem:gradopUnbias} informally implies that for all practical purposes of the proofs later, $\text{GradOp}$ can be treated as a common gradient operation.

\begin{lemma} \label{lem:FPlocalComponents}
$$\FPhat_{\delta}(\w) = \hat{F}_{\delta}(\w) + \hat{F}_{g_{\delta}}(\w)$$
\end{lemma}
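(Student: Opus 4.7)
The plan is to prove this by direct application of the definition of the $\delta$-smoothed function (Definition~\ref{fpsmoothDef}) together with the additive decomposition of $F^+$ established in equation~\ref{objMdfd} and linearity of expectation. In other words, smoothing commutes with finite sums of functions, so the decomposition $F^+ = F + F_g$ at the un-smoothed level immediately transfers to the smoothed level.

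First I would unfold the left-hand side using Definition~\ref{fpsmoothDef} applied to $F^+$:
\begin{equation*}
\FPhat_{\delta}(\w) \;=\; \E_{\uu \sim \B}\bigl[F^+(\w + \delta \uu)\bigr].
\end{equation*}
Next I would substitute the identity $F^+(\w) = F(\w) + F_g(\w)$ from equation~\ref{objMdfd} into the expectand, obtaining
\begin{equation*}
\FPhat_{\delta}(\w) \;=\; \E_{\uu \sim \B}\bigl[F(\w+\delta\uu) + F_g(\w+\delta\uu)\bigr].
\end{equation*}
Finally, by linearity of expectation, the right-hand side splits as
\begin{equation*}
\E_{\uu \sim \B}[F(\w+\delta\uu)] + \E_{\uu \sim \B}[F_g(\w+\delta\uu)] \;=\; \hat{F}_{\delta}(\w) + \hat{F}_{g_{\delta}}(\w),
\end{equation*}
where each term is again recognized via Definition~\ref{fpsmoothDef} as the $\delta$-smoothing of the respective component.

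There is essentially no obstacle here: the statement is really just a bookkeeping identity asserting that the smoothing operator is linear. The only thing to be careful about is that the normalization constants used in the definitions of $F$ and $F_g$ (see the comment following equation~\ref{objMdfd}) are the ones that make the decomposition $F^+ = F + F_g$ hold exactly; once that algebraic identity is in hand, the smoothing step is automatic. I would therefore keep the proof to a short three-line display following the outline above, and not bother invoking measurability or integrability concerns since $F$ and $F_g$ inherit Lipschitz smooth gradients (Assumption~\ref{asmptn3a}) and are hence continuous, so the expectations over the compact ball $\B$ are well-defined.
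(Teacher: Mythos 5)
Your proof is correct and is essentially identical to the paper's own argument: both start from the decomposition $F^+ = F + F_g$, apply the smoothing expectation to each side, and invoke linearity of expectation. The extra remarks on normalization and integrability are fine but not needed beyond what the paper already does.
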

Lemma~\ref{lem:FPlocalComponents} implies that the locally smoothed $\FPhat_{\delta}$ can be broken into original and transformed components similar to its non-local counterpart (see equation~\ref{objMdfd}).
\begin{lemma} \label{lem:FPlocalStrong}
Given assumptions ~\ref{asmptn1b} and ~\ref{asmptn3b}, and for every $\delta> 0 $ let $r_\delta=3\delta$, and $\w^*_\delta = \argmin_{\w\in\K} \FPhat_{\delta}(\w)$,
  then over $B_{  r_\delta} (\w^*_\delta)$, $\FPhat$ is $\mukappa$-strongly-convex
\end{lemma}

\begin{corollary}
\label{cor:CnvxRt}
Theorem~\ref{thmCnvxRt} combined with lemma~\ref{lem:gradopUnbias} implies that in algorithm~\ref{alg:generic} for $$ \TFplus \geq \frac{2\ln{(\delta_m/(4 \textrm{diam}(\K_m))}}{\ln{(1-\eta\mukappa)}} $$
$$||\w_{m+1} - \w^*_m|| \leq \delta_{m+1}/2$$
\end{corollary}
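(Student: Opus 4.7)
}
The plan is to reduce the corollary to a direct application of Theorem~\ref{thmCnvxRt} inside a single outer iteration $m$ of Algorithm~\ref{alg:generic}, with the convergence rate improved from $\mu/L$ to $(\mu+\kappa)/L$ because of the strongly-convex transformed component. Concretely, inside outer iteration $m$, the inner \textbf{GradDescent} call is run on the $\delta_m$-smoothed objective $\FPhat_{\delta_m}$ with step size $\eta\leq 1/L$, initial point $\w_m\in\K_m$, and gradient oracle $\text{GradOp}(\cdot,\delta_m)$. Lemma~\ref{lem:gradopUnbias} justifies treating this oracle as a gradient oracle for $\FPhat_{\delta_m}$ (the unbiased-estimate property lets us import the deterministic guarantee of Theorem~\ref{thmCnvxRt} in expectation, so the same algebra goes through).

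Next I would verify the structural hypotheses of Theorem~\ref{thmCnvxRt} in the smoothed, localized setting. By Lemma~\ref{lem:FPlocalStrong}, $\FPhat_{\delta_m}$ is $(\mu+\kappa)$-strongly convex on $B_{3\delta_m}(\w^*_{\delta_m})$; by Assumption~\ref{asmptn3a} (and the fact that ball-averaging preserves Lipschitz smoothness of the gradient), $\nabla \FPhat_{\delta_m}$ is $L$-Lipschitz. One also needs to argue that the GradDescent trajectory stays in the region where strong convexity is available, which is the point of shrinking the decision set to $\K_m = \K\cap B(\w_m,1.5\delta_m)$: combined with the inductive invariant $\|\w_m-\w^*_{\delta_m}\|\leq \delta_m/2$ (from the previous outer iteration, via the centering property in Assumption~\ref{asmptn3b}), every iterate lives in $B_{3\delta_m}(\w^*_{\delta_m})$, so the local strong convexity guarantee is uniformly valid along the run.

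With these hypotheses in place, Theorem~\ref{thmCnvxRt} applied to $\FPhat_{\delta_m}$ on $\K_m$ yields
\begin{equation*}
\|\w_{m+1}-\w^*_{\delta_m}\|^2 \;\leq\; \bigl(1-\eta(\mu+\kappa)\bigr)^{\TFplus}\,\|\w_m-\w^*_{\delta_m}\|^2.
\end{equation*}
Since $\w_m\in\K_m$, we bound $\|\w_m-\w^*_{\delta_m}\|\leq \textrm{diam}(\K_m)$, and we want the right-hand side to be at most $(\delta_{m+1}/2)^2=(\delta_m/4)^2$. Taking logarithms (and remembering that $\ln(1-\eta(\mu+\kappa))<0$, which flips the inequality) yields exactly
\begin{equation*}
\TFplus \;\geq\; \frac{2\ln\!\bigl(\delta_m/(4\,\textrm{diam}(\K_m))\bigr)}{\ln\bigl(1-\eta(\mu+\kappa)\bigr)},
\end{equation*}
which is the threshold set in the algorithm. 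The conclusion $\|\w_{m+1}-\w^*_m\|\leq \delta_{m+1}/2$ follows.

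The main obstacle, and the step I would spend the most care on, is not the final algebra but the verification that the iterates remain inside $B_{3\delta_m}(\w^*_{\delta_m})$ so that Lemma~\ref{lem:FPlocalStrong} actually applies throughout the inner run. This is where the choice of shrink radius $1.5\delta_m$ and the inductive invariant $\|\w_m-\w^*_{\delta_{m-1}}\|\leq \delta_m/2$ coming out of the previous outer iteration (together with the centering property to move from $\w^*_{\delta_{m-1}}$ to $\w^*_{\delta_m}$) become essential; without it, the strong-convexity-based contraction from Theorem~\ref{thmCnvxRt} cannot be legitimately invoked, and the corollary degenerates to a weaker, non-quantitative statement.
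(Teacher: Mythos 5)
Your proposal is correct and matches the paper's intended argument: the paper gives no standalone proof of this corollary but uses exactly this reasoning inside the proof of Lemma~\ref{lem:induction} --- apply the linear contraction of Theorem~\ref{thmCnvxRt} to the locally $\mathbf{(\mu+\kappa)}$-strongly-convex smoothed objective $\FPhat_{\delta_m}$ (via Lemmas~\ref{lem:gradopUnbias} and~\ref{lem:FPlocalStrong}), bound the initial distance by $\textrm{diam}(\K_m)$, and solve the logarithmic inequality for $\TFplus$. Your added care about the iterates remaining in the region of local strong convexity (via the shrunken set $\K_m$ and the inductive centering argument) is exactly the point the paper defers to Lemma~\ref{lem:induction}, so your treatment is, if anything, slightly more explicit than the original.
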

Corollary~\ref{cor:CnvxRt} states that each internal call to GradDescent in algorithm~\ref{alg:generic} takes $\TFplus$ iterations. This fact will be used in the proof of main result for non-convexity, theorem~\ref{thnvmNncnvxPrbblty}.

\begin{lemma} \label{lem:induction}
Consider $M$, $\K_m$ and  $\w_{m+1}$ as defined in Algorithm~\ref{alg:generic}. Also denote by $\w^*_m$ the minimizer of $\FPhat_{\delta_m}$ in $\K_m$. Then the following  holds for all  $1\leq m \leq M$ :
\begin{enumerate}
\item The smoothed version $\FPhat_{\delta_m} $is $\mukappa$-strongly convex over $\K_m$, and $\w_{m}^* \in \K_m$.
\item And,  $||\w_{m+1} - \w_m^*|| \leq  \delta_{m+1}/2$
\end{enumerate}
\end{lemma}
Lemma~\ref{lem:induction} sates that in each outer iteration of algorithm~\ref{alg:generic} the local optima obtained are within $\delta_m/2$ distance of each other. 

\begin{theorem}[Non-Convex Convergence]
\label{thnvmNncnvxPrbblty}
Let  $\eps \in (0,1)$ and $\K$ be a convex set. Applying Algorithm~\ref{alg:generic},  
 after  
 $T_{total}$ rounds of total internal gradient updates,  Algorithm~\ref{alg:generic} outputs a point $\w_{M+1}$ such that
 
 $$||\w_{M+1} - \w^*|| \leq \delta_1(1-\eta\mukappa)^{\frac{T_{total}}{2(\ln{6}/\ln{1.5})}}$$
 
where $\w^*$ is the global optima for $\Fplus$.
\end{theorem}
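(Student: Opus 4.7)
The plan is to exploit the graduated structure of Algorithm~\ref{alg:generic}: in each outer round $m$ we have a locally strongly-convex surrogate $\FPhat_{\delta_m}$ whose minimizer $\w^*_m$ migrates predictably toward the true global minimum $\w^*$ of $\Fplus$. I would combine (a) the inner-loop contraction from Corollary~\ref{cor:CnvxRt}, (b) the local $(\mu+\kappa)$-strong-convexity from Lemma~\ref{lem:FPlocalStrong}, and (c) the centering clause in Assumption~\ref{asmptn3b} to drive both $\w_{M+1}\to\w^*_M$ and $\w^*_M\to\w^*$ simultaneously, then convert the outer count $M$ into the inner count $T_{total}$.

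First I would make the induction of Lemma~\ref{lem:induction} explicit. The base case is immediate since $\delta_1 = \text{diam}(\K)$ forces $\w^*_1\in\K_1$. For the inductive step, Lemma~\ref{lem:FPlocalStrong} makes $\FPhat_{\delta_m}$ $(\mu+\kappa)$-strongly-convex on $B_{r_{\delta_m}}(\w^*_m)\supset \K_m$, and Corollary~\ref{cor:CnvxRt} together with the chosen $\TFplus$ yields $\|\w_{m+1}-\w^*_m\|\leq \delta_{m+1}/2$. I would then verify $\w^*_{m+1}\in \K_{m+1}$ via $\|\w^*_{m+1}-\w_{m+1}\|\leq \|\w^*_{m+1}-\w^*_m\| + \|\w^*_m-\w_{m+1}\| \leq \delta_m/2 + \delta_{m+1}/2 = 1.5\,\delta_{m+1}$, where the first term uses the centering property of Assumption~\ref{asmptn3b} and the second is the just-established inner-loop bound.

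Next I would pass from the smoothed minimum to the true one. Since $\Fplus$ has a unique minimizer $\w^*$ by Theorem~\ref{thmUnchngdOptm}, and $\{\w^*_{\delta_m}\}$ is Cauchy by iterating the centering property, summing the geometric tail yields $\|\w^*_M - \w^*\|\leq \sum_{j\geq M}\delta_j/2 = \delta_M$. Combined with $\|\w_{M+1}-\w^*_M\|\leq \delta_{M+1}/2$, the triangle inequality gives $\|\w_{M+1}-\w^*\| = O(\delta_M)$, so the outer sequence contracts geometrically in $M$.

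Finally I would convert outer iterations into inner gradient steps. Because $\text{diam}(\K_m)\leq 3\delta_m$, the ratio $\delta_m/(4\,\text{diam}(\K_m))$ is independent of $m$, so each outer round costs a constant $\TFplus = 2\ln 6/(-\ln(1-\eta(\mu+\kappa)))$ inner steps and $T_{total}=M\cdot\TFplus$. Inverting to $M=T_{total}/\TFplus$ and substituting into the per-outer contraction rewrites the geometric decay as $(1-\eta(\mu+\kappa))^{T_{total}\cdot c/(2\ln 6)}$ for an effective contraction constant $c$ that encodes the per-round shrinkage of $\|\w_m - \w^*\|$. The main obstacle is pinning down this constant precisely: although $\delta_m$ halves each round, the additive slack $\delta_{M+1}/2 + \delta_M$ from the second step softens the per-round decay of $\|\w_{M+1}-\w^*\|$, and the correct accounting yields $c=\ln 1.5$ --- exactly the factor appearing in the theorem's denominator $2\ln 6/\ln 1.5$.
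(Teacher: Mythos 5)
Your proposal follows essentially the same route as the paper's proof: the telescoping triangle inequality over the drifting smoothed optima $\w^*_m$ (summed as a geometric series via the centering property), the per-round inner-loop bound from Corollary~\ref{cor:CnvxRt}, and the conversion $T_{total}=M\cdot\TFplus$ followed by inversion to express the final distance in terms of $T_{total}$. The one place you hesitate --- pinning down whether the per-round shrinkage constant is $\ln 2$ (from $\delta_{m+1}=\delta_m/2$ in the algorithm) or $\ln 1.5$ --- is not a defect in your reasoning but an inconsistency you have correctly detected in the source: the paper's proof simply writes $\delta_{M+1}=\mathrm{diam}(\K)/(1.5)^M$, contradicting the algorithm's halving schedule, and derives $M=\ln(\delta_1/\eps)/\ln(1.5)$ from that; with the halving schedule actually stated in Algorithm~\ref{alg:generic} the exponent would carry $\ln 2$ instead. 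So your structure is the paper's structure, and your admitted obstacle is a genuine artifact of the paper rather than a missing idea on your part.
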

The theorem above implies that algorithm~\ref{alg:generic} converges with a linear rate of $\mukappa$  instead of $\mu$ 
when the transformed data points are added to the positive self-supervised learning scheme. 

Figure~\ref{fig:garduatedEx} shows example runs of algorithm~\ref{alg:generic} over an arbitrary function. The graphs of the function is plotted at different smoothing stages. As the $\delta$ values decrease the function gets less smoother and more bumpier revealing more local optima. A point to note is that the change in the optima from one $\delta$ to the next is within range $\delta/2$ (assumption~\ref{asmptn3b}). Hence as seen in the figure~\ref{fig:garduatedEx}, discovered optima (red ball) in each phase doesn't change by much. 

\subsection{Data Transform Effect on Learning Stages}
\begin{corollary}
\label{cor:PhaseEffect}
Theorem~\ref{thmCnvxRt} combined with lemma~\ref{lem:gradopUnbias} implies that in algorithm~\ref{alg:generic} the transform makes the objective $\Fplus$ converge much faster in the initial iterations of the learning scheme than the later i.e.
$$\|\w_{m+1}-\w_m^*\| \leq \|\w_m-\w_{m-1}^*\|$$
\end{corollary}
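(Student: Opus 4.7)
The plan is to apply Lemma~\ref{lem:induction} at two consecutive outer indices of Algorithm~\ref{alg:generic} and then exploit the halving schedule $\delta_{m+1}=\delta_m/2$ that is hard-coded into the algorithm. First I would invoke the lemma at stage $m-1$ to obtain $\|\w_m-\w_{m-1}^*\|\leq \delta_m/2$, and then again at stage $m$ to obtain $\|\w_{m+1}-\w_m^*\|\leq \delta_{m+1}/2=\delta_m/4$. Since $\delta_m/4\leq \delta_m/2$, the guaranteed convergence envelope at stage $m$ is already at least a factor of two tighter than at stage $m-1$, which delivers the claimed inequality in the sense of these envelopes.

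Next I would make the role of the transform explicit. The envelopes $\delta_{m+1}/2$ are attained inside $T_{F^+}$ inner steps only because Corollary~\ref{cor:CnvxRt} plugs the accelerated rate $(1-\eta(\mu+\kappa))$ from Theorem~\ref{thmCnvxRt} into the inner-loop length $T_{F^+}=2\ln(\delta_m/(4\,\mathrm{diam}(\K_m)))/\ln(1-\eta(\mu+\kappa))$. Stripping the transform away would replace this with the slower $(1-\eta\mu)$ contraction, which either inflates $T_{F^+}$ or weakens the per-stage envelope, so the transform is exactly what buys the halving pattern $\delta_{m+1}=\delta_m/2$ at a fixed per-outer-stage budget. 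Because the absolute per-stage drop scales like $\delta_m$, the reduction in $\|\w_{m+1}-\w_m^*\|$ is largest in the early stages where $\delta_m$ is still close to $\mathrm{diam}(\K)$, and geometrically damped thereafter---this is the ``faster in initial iterations'' phenomenon the corollary is asserting.

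The principal subtlety I anticipate is that Lemma~\ref{lem:induction} only supplies upper bounds, so in principle one could imagine an atypical run where the actual distances violate the ordering even though their envelopes respect it. I would handle this in one of two ways: either read the corollary as a statement about the guaranteed worst-case envelope (the convention followed in~\cite{pmlr-v48-hazanb16}, from which the whole graduated framework is inherited), or tighten the argument by running the Theorem~\ref{thmCnvxRt} contraction inside a single outer stage with the inherited initial radius $\|\w_m-\w_m^*\|\leq 1.5\delta_m$ and comparing to the analogous radius $1.5\delta_{m-1}=3\delta_m$ at the previous stage. Under the same per-step contraction $(1-\eta(\mu+\kappa))$, the strictly smaller starting radius propagates to a strictly smaller exit radius, promoting the envelope inequality into a pointwise one and completing the proof.
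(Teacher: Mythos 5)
The paper supplies no proof of this corollary at all: it is asserted in the main text as following from Theorem~\ref{thmCnvxRt} and Lemma~\ref{lem:gradopUnbias}, and the appendix contains proofs only of the theorems and lemmas, not of either corollary. Your reconstruction is therefore the natural one, and it is sound at the level the statement can actually be defended: Lemma~\ref{lem:induction} gives $\|\w_{m}-\w_{m-1}^*\|\leq\delta_{m}/2$ and $\|\w_{m+1}-\w_{m}^*\|\leq\delta_{m+1}/2=\delta_m/4$, and the halving schedule makes the stage-$m$ envelope strictly tighter than the stage-$(m-1)$ envelope, with the transform entering only through the $(1-\eta(\mu+\kappa))$ contraction that fixes the per-stage budget $\TFplus$. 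You also correctly flag the one genuine weakness in the corollary as stated: both sides of the claimed inequality are actual run-dependent distances, and an ordering of upper bounds does not by itself order the quantities they bound. The paper silently elides this; your two proposed repairs (read the claim as a worst-case-envelope statement, or propagate the inherited initial radii $1.5\delta_m$ versus $1.5\delta_{m-1}=3\delta_m$ through the same contraction) are the right ways to make the claim honest, though note that even the second repair still yields an inequality between guaranteed radii rather than between the realized distances, so the envelope reading is ultimately unavoidable. In short, your argument is at least as rigorous as anything the paper offers for this statement.
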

Corollary~\ref{cor:PhaseEffect} implies that the transformed data points make the algorithm~\ref{alg:generic} much more faster in the initial phases than the later. This means that we can use the original and transformed data points in the first $k$ outer iterations of the algorithm~\ref{alg:generic} in the training phase and then use only the original data points after that to save computation and still get a reasonable speedup. 

\section{Data and Experiments}
We setup empirical experiments to verify our theoretical analyses for both convex and non-convex loss. We use the setup from~\citet{NIPS2007_3181} for our convex setting\footnote{\url{https://github.com/drumichiro/convex-clustering}}. The objective loss is defined in equation~\ref{eqn:cvxLossEg} 
\begin{equation}
\label{eqn:cvxLossEg}
F = \frac{1}{n}\sum_{i=1}^n\log\bigg[\sum_{j=1}^k q_j \exp^{-\beta\d_\phi(x_i,x_j)}\bigg]
\end{equation}
where ${x_i} \in \reals^d$ are the data points, $q_j$ is the probability of $x_j$ being a centroid (cluster-center), and $k$ is the total number of clusters. 
Our empirical results are reported on the objective in equation~\ref{eqn:cvxLossEg} with synthetic data. We set $k=4$ (four clusters) and $d=2$ (two dimensional data points), and cluster centroid coordinates as $ \{ (10,20), (30,20), (20,10), (20,30)\} $.
We add zero-mean Gaussian noise to the original data points and add them back as a positive self-supervised transform to illustrate the faster rate of convergence. We change the variance of the Gaussian to illustrate the effects of various transformations (``noisiness" in this case). We keep the optima fixed for the two objectives $F$ and $\Fplus$ given the synthetic data. For various cluster settings we compare the number of iterations taken to converge by $F$ and $\Fplus$, while constraining the new optima to be close to the original optima by a pre-defined threshold.

For non-convex settings we use DeepCluster~\cite{conf/eccv/CaronBJD18}. The objective loss for the framework is
\begin{equation}
\label{eqn:deepcluster}
\min\limits_{\theta,w} \frac{1}{N}\sum_{n=1}^{N} \ell\big( g_w\big(f_\theta(x_n) \big), y_n\big)
\end{equation}
where $x_n$ are the data points, $y_n \in \{0,1\}^k$ corresponding labels with $k$ possible classes, $f_\theta$ is the deep learning (convnet) mapping with parameters $\theta$.  $g_w$ is the classifier with parameters $w$ that takes feature representation $f_\theta$ and predicts a class. $\ell$ is the final multinomial logistic loss. 
We take the opensource code for  DeepCluster~\footnote{\url{https://github.com/facebookresearch/deepcluster}} and run it over random-10 ImageNet (ImageNet-10) data~\cite{deng2012imagenet}. 
We add rotation as a transform to the original data points and add them back to show its convergence impact. In every run of the experiment, we randomly choose 10 classes from ImageNet's original 1000 classes and train for getting representation of each image using DeepCluster algorithm. A point to be noted is that the transformed data points are only added to the training of feature representation extractor, and not to the logistic regression classifier which is the last layer of the deep learning network. This makes our results much more helpful in showing that the transforms are mostly helping the feature representation phase, whose output is an input to the final logistic layer. Learnt representations are used to train a multi-class classification model. We run each experiment four times and show the best accuracy of the model 
across these four runs. We use 1,300 images per class and for the transformed setup we add equally number of transformed images to the original data per class.


\section{Empirical Insights}
Our experimental results conform to the theoretical insights that we provide in the previous sections. We specifically show faster convergence for both convex and non-convex loss settings. Some data transforms are more help than others--in fact some of them harm the learning process as we will see later.   

\begin{figure}[htb!]
\begin{center}
\centerline{\includegraphics[width=1.05\columnwidth]{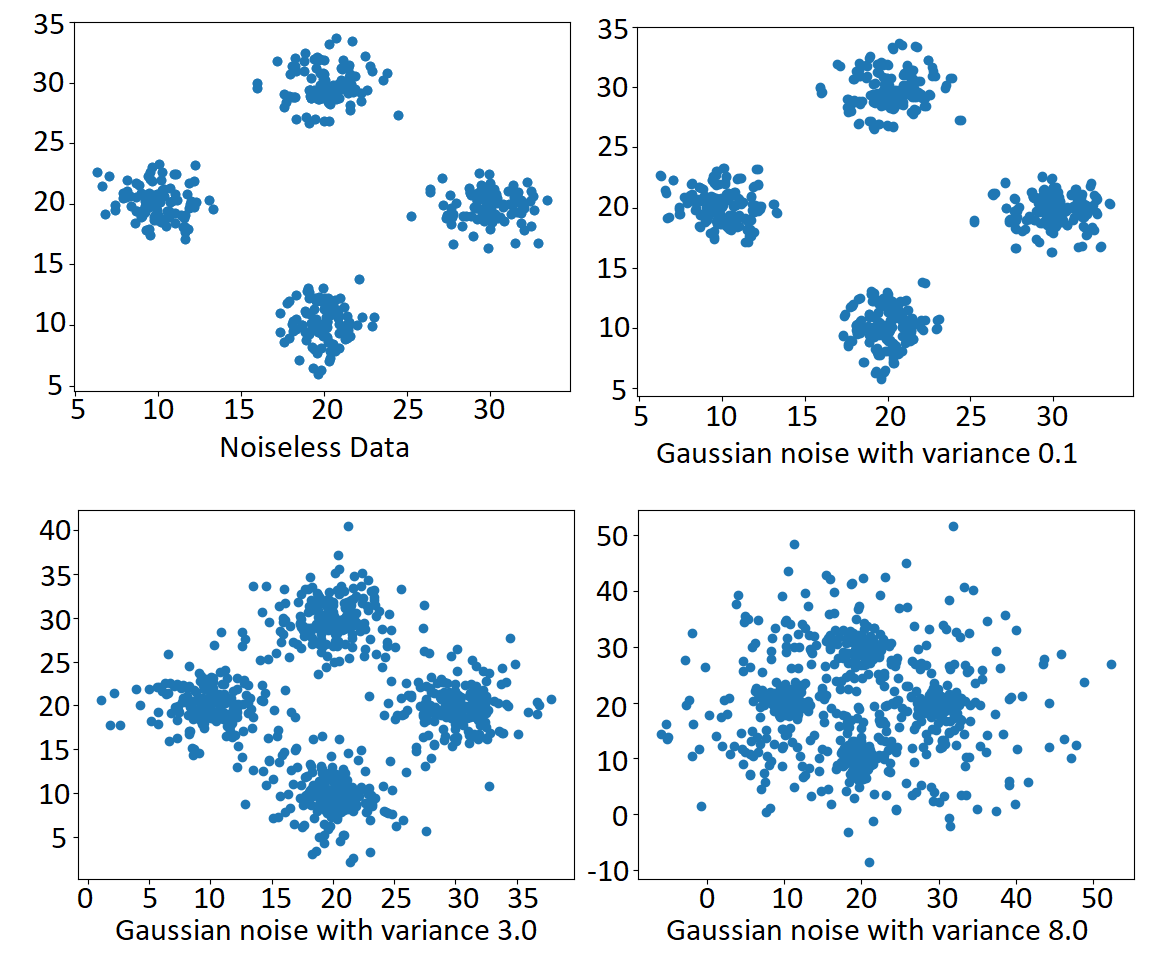}}
\caption{Datasets used for the convex clustering experimental setup. Going from left to right and top to bottom increases the noise variance. These are after adding the transformed data (noise) except the top left plot.}
\label{noise_0_0}
\end{center}
\end{figure}

\begin{figure}[htb!]
\begin{center}
\centerline{\includegraphics[width=1.05\columnwidth]{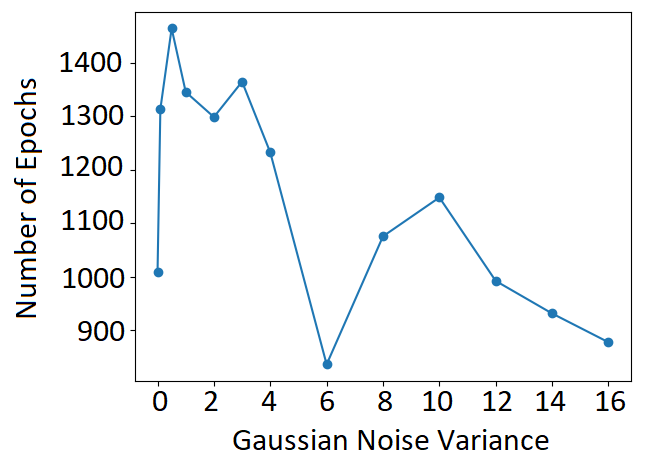}}
\caption{Number of epochs taken to converge to the global minima. X-axis represents amount of noise added and Y-axis represented number of epochs.}
\label{epochs_noise_avg}
\end{center}
\end{figure}

\subsection{Convex Results}
Figure \ref{noise_0_0} top left shows the original 400 points generated with centroids $ \{ (10,20), (30,20), (20,10), (20,30)\} $. We add Gaussian noise to each data point with increasing variance. For each experiment there are a total of 800 points out of which 400 are original and 400 are noisy version. Figure \ref{noise_0_0} also shows data with noisy points added.  Moving left to right and top to bottom in figure~\ref{noise_0_0} adds more scatter to the plots.  To keep the convergence rate comparison fair we repeat each point twice so that original dataset also contains 800 points. As shown in figure \ref{epochs_noise_avg}, noisy versions converge faster compared to the original data. For example, clustering with original data takes around 1000 epochs to converge to global minima but clustering with noisy or transformed data points added at noise level 6 takes around 850 epochs. We also observe that not all transforms help in convergence. In figure~\ref{epochs_noise_avg} some transforms perform worse than the non-noise baseline, especially those with the noise variance between 0 and 6. This conforms to our analytical insight that strongly convex transforms are more helpful in the convergence.

\begin{figure}[htb!]
\begin{center}
\centerline{\includegraphics[width=1.05\columnwidth]{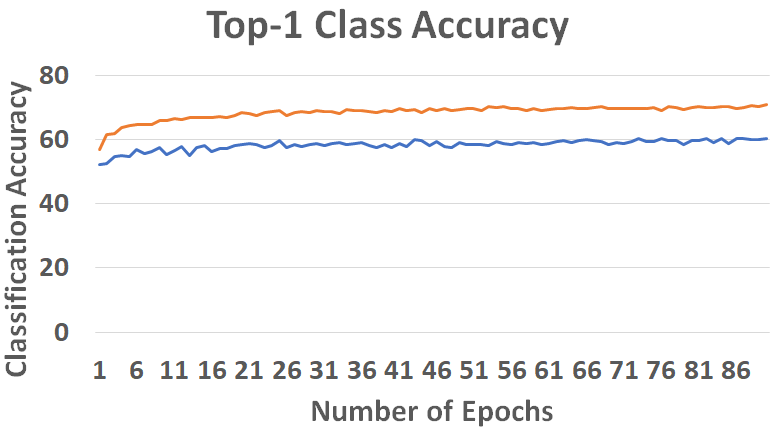}}
\caption{Number of epochs vs top-1 class classification accuracy on ImageNet-10. The orange line represents the accuracy with transformation and the blue line is accuracy with baseline.}
\label{fig:top1_accrcy}
\end{center}
\end{figure}

\begin{figure}[htb!]
\begin{center}
\centerline{\includegraphics[width=1.05\columnwidth]{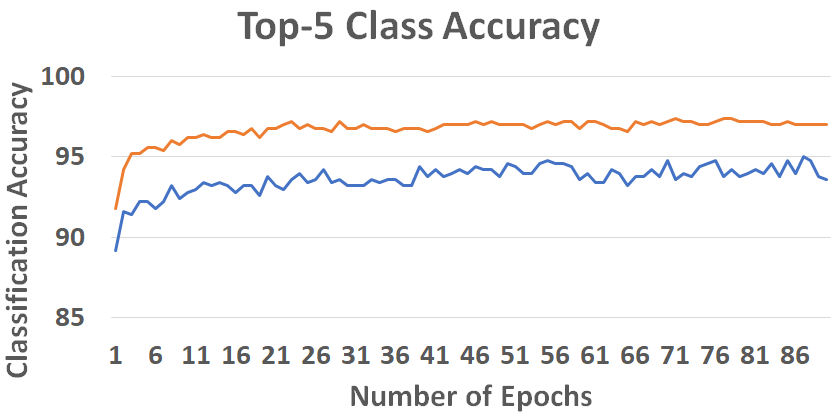}}
\caption{Number of epochs vs top-1 class classification accuracy on ImageNet-10. The orange line represents the accuracy with transformation and the blue line is accuracy with baseline.}
\label{fig:top5_accrcy}
\end{center}
\end{figure}

\subsection{Non-Convex Results}
For non-convex case, our experimental results are based on the DeepCluster~\cite{conf/eccv/CaronBJD18} setup. We take image rotation as the transform. We add the transform only for the feature representation learning phase of DeepCluster. Given that our transformed setup outperforms the baseline non-transformed case, it is clear that the gains from transformation is mostly coming from feature representation learning. Figure~\ref{fig:top1_accrcy} shows the number of epochs vs top-1 class classification accuracy on ImageNet-10 dataset. We see that for each epoch the transformed setup consistently outperforms the non-transformed setup. Another notable aspect is that the gains are much superior in the early epochs for transformed setup compared to later epochs which conforms to our corollary~\ref{cor:PhaseEffect}. The transformed setup has a faster convergence rate for top-1 class classification as seen in table~\ref{tab:top1}. In fact though transformed setup has twice the number of data points, to reach the accuracy of 60\%, baseline setup takes 43 epochs whereas transformed setup takes 2 epochs i.e. it is 21.5 times faster. The baseline never reaches accuracies of 65\% or above for top-1 class classification.

\begin{table}[tbh!]
\label{tab:top1}
\begin{center}
\begin{small}
\begin{sc}
\begin{tabular}{lccr}
\toprule
Accuracy & Baseline Epochs & Transformed Epochs  \\
\midrule
60\%  & 43 & 2 \\
65\%  & - & 9 \\
70\%  & -   & 53\\
\bottomrule
\end{tabular}
\end{sc}
\end{small}
\end{center}
\caption{Number of epochs needed by transformed and baseline setup for different top-1 class classification accuracy levels for ImageNet-10. Baseline never reaches accuracies 65\% and above}
\end{table}

\begin{table}[tbh!]
\label{tab:top5}
\begin{center}
\begin{small}
\begin{sc}
\begin{tabular}{lccr}
\toprule
Accuracy & Baseline Epochs & Transformed Epochs  \\
\midrule
91\%  & 2 & 1 \\
93\%  & 8 & 2 \\
95\%  & 87 & 3\\
97\%  & - & 22\\
\bottomrule
\end{tabular}
\end{sc}
\end{small}
\end{center}
\caption{Number of epochs needed by transformed and baseline setup for different top-5 class classification accuracy levels for ImageNet-10. Baseline never reaches accuracies 97\% and above}
\end{table}

Figure~\ref{fig:top5_accrcy} shows top-5 class classification results. The transformed setup outperforms baseline setup consistently at each epoch. And the gains over baseline are achieved in early epochs as stated by corollary~\ref{cor:PhaseEffect}. Table~\ref{tab:top5} shows the number of epochs taken by transformed and baseline setups. Though transformed setup has twice the number of data points it is twice faster at reaching 91\% accuracy. It is four and twenty nine times faster at reaching accuracies of 93\% and 95\% respectively. Transformed setup reaches an accuracy of 97\% in 22 epochs whereas the baseline setup is unable to get there. This strongly suggests that valid transforms do help in reaching optima faster as reported in our analytical insights.

Overall the results imply that model which uses transformations consistently performs better than model that does not. The experiments conform to the theoretical insights obtained. It's a known fact that a well designed data augmentation or transformation significantly improves performance of deep learning~\cite{abs-1809-02492}. \citet{abs-1809-02492} observe that randomly pasting objects in the image hurts the performance of the learning task unless the placement is contextual. We show that one possible and probable explanation is that the transformed data points are making the objective more strongly convex in the parameter space, as hypothesized. This then makes the transformation context and data dependent---i.e. transformations will be helpful or harmful based on the data points. Precisely which transforms are helpful and how much do they make the objective strongly convex, however is beyond the scope of the paper given this is the first theoretical work in this subject matter, as far as we know. We will extend this work along this direction in the extended version of the paper.

\section{Conclusion}
The aim of this work is to analyse the effect of transforms in representation learning in the context of self-supervision. We choose clustering as our objective. Our basic insight---data transforms that make the objective strongly convex are useful---implicitly acknowledges that a transform is helpful or harmful depending on the data context.  \citet{abs-1809-02492} make similar observations in their work and propose an explicit context model to predict whether an image region is suitable for placing a given image object. We also observe in figure~\ref{epochs_noise_avg} that not all transformations are helpful and in the worst case harmful for the learning task. Our analysis also states that transforms are more helpful in the initial phases of the learning process. We observe this phenomenon empirically as well. Hence it is a good idea to augment the learning process with data transforms in the beginning and remove it later to save computation costs. 
Our work, as far as we know, is the first attempt in self-supervised literature to understand the impact of data transformation in predictive learning, theoretically or empirically. There have been some work in recent times in representation learning to analyze the effect of implicit positive and negative labels in self-supervised settings~\cite{abs-1902-09229}. But No works analyzing data transformation effects on self-supervision.

The empirical and theoretical insights presented here open interesting venues of research. Analyzing what other family of transforms besides strongly-convex are helpful in self-supervision is an important research question. We analyzed a subset of non-convex loss functions---those that obey $\sigma$-niceness. To extend this analysis to other non-convex objectives is worthwhile. Extending the data transformation effects from self-supervision to unsupervised or supervised approaches answers many interesting questions in machine learning.



\bibliography{example_paper}
\bibliographystyle{icml2020}

\appendix
\onecolumn
\section{Theoretical Analysis}


\subsection{Preliminaries}
\subsubsection{Proof of Theorem \ref{thmUnchngdOptm} in main paper}
\begin{proof}
The proof is by contradiction. Assume that $W^* \neq W^*_+$, in that case the centroids for objective $F$ and $F^+$ are different since $W^*$ and $W^*_+$ determine the respective optimal centroids and by assumption~\ref{asmptn1b} they are unique. But if centroids for objectives $F$ and $F^+$ are different, it contradicts assumption~\ref{asmptn1a}. Hence $W^* = W^*_+$.
\end{proof}
\subsection{Convex Proofs}

\subsubsection{Proof of Lemma \ref{lem:FPstrong} in main paper }
\begin{proof}
From equation~\ref{asmptn2aEq} and ~\ref{asmptn2cEq}
\begin{eqnarray}
\label{fPlsStrngCnvxty}
F(u)+F_g(u) &\geq& F(v) + F_g(v)  
+ \langle\nabla F(v),u-v\rangle + \langle\nabla F_g(v),u-v\rangle 
+ \frac{\mu}{2}||u-v||^2_2 + \frac{\kappa}{2}||u-v||^2_2 \nonumber \\
\implies F^+(u) &\geq& F^+(v) + \langle\nabla F^+(v),u-v\rangle 
+ \frac{\mu+\kappa}{2}||u-v||^2_2, \forall u,v \in \mathbb{R^d}
\end{eqnarray}
\end{proof}

\subsubsection{Proof of Theorem \ref{thmCnvxRt} in main paper}

\begin{proof}
Looking into the distance between Optima and $(t+1)_{th}$ update 
\begin{equation}
\left.\begin{aligned}
||W^{t+1}-W^*||^2_2 &= ||W^t - W^* - \eta \nabla F^+(W^t)||^2_2 \\
&= ||W^t - W^*||^2_2 - 2\eta\langle\nabla F^+(W^t), W^t-W^*\rangle + \eta^2||\nabla F^+(W^t)||^2_2 \\
&\leq (1-\eta(\mu+\kappa))||W^t - W^*||^2_2 - 2\eta(F^+(W^t) - F^+(W^*)) 
+ \eta^2||\nabla F^+(W^t)||^2_2  \\
&\mbox{above using equation \ref{fPlsStrngCnvxty}}  \\
&\leq (1-\eta(\mu+\kappa))||W^t - W^*||^2_2 - 2\eta(F^+(W^t) - F^+(W^*)) 
+ \eta^2L(F^+(W^t) - F^+(W^*))  \\
&\mbox{above using equation \ref{asmptn2bEq}}  \\
&= (1-\eta(\mu+\kappa))||W^t - W^*||^2_2 - 2\eta(1-L\eta)(F^+(W^t) - F^+(W^*))  \\
&\leq (1-\eta(\mu+\kappa))||W^t - W^*||^2_2\\
&\mbox{above using $\frac{1}{L}\geq \eta > 0$}  \\
\end{aligned}\right.
\end{equation}
Unrolling the above sequence $ (1-\eta(\mu+\kappa))||W^t - W^*||^2_2$ gives $||W^{t+1}-W^*||^2_2 \leq (1-\eta(\mu+\kappa))^{t+1}||W_0-W^*||^2_2$
\end{proof}
\subsection{Non-Convex Proofs}

\subsubsection{Proof of Lemma \ref{lem:gradopUnbias} in main paper}
\begin{proof}
Taking expectations on both sides and differentiating w.r.t. $\w$ gives us the desired result.
\end{proof}

\subsubsection{Proof of Lemma \ref{lem:FPlocalComponents} in main paper}
\begin{proof}
\begin{eqnarray}
\Fplus(\w) &=& F(\w) + F_g(\w) \nonumber \\
\implies \E_{\uu \sim \B } [\Fplus(\w + \delta \uu)] &=& \E_{\uu \sim \B } [F(\w + \delta \uu)] + \E_{\uu \sim \B } [F_g(\w + \delta \uu) ] \nonumber \\
\FPhat_{\delta}(\w) &=& \hat{F}_{\delta}(\w) + \hat{F}_{g_{\delta}}(\w) \nonumber 
\end{eqnarray}
\end{proof}

\begin{lemma} \label{lem:FPlocalStrong}
Given assumptions ~\ref{asmptn1b} and ~\ref{asmptn3b}, and for every $\delta> 0 $ let $r_\delta=3\delta$, and $\w^*_\delta = \argmin_{\w\in\K} \FPhat_{\delta}(\w)$,
  then over $B_{  r_\delta} (\w^*_\delta)$, $\FPhat$ is $\mukappa$-strongly-convex
\end{lemma}
\begin{proof}
From locally positive self-supervision, assumption~\ref{asmptn1a}, and ~\ref{asmptn3b} 
$$\argmin_{\w\in\K} \FPhat_{\delta}(\w) = \argmin_{\w\in\K} F_{\delta}(\w) $$
Using lemma~\ref{lem:FPlocalComponents} and the same arguments as in lemma~\ref{lem:FPstrong}, we conclude that $\FPhat$ is $\mukappa$-strongly-convex in $B_{  r_\delta} (\w^*_\delta)$
\end{proof}

\subsubsection{Proof of Theorem \ref{thnvmNncnvxPrbblty} in main paper}

\begin{lemma} \label{lem:induction}
Consider $M$, $\K_m$ and  $\w_{m+1}$ as defined in Algorithm~\ref{alg:generic}. Also denote by $\w^*_m$ the minimizer of $\FPhat_{\delta_m}$ in $\K_m$. Then the following  holds for all  $1\leq m \leq M$ :
\begin{enumerate}
\item The smoothed version $\FPhat_{\delta_m} $is $\mukappa$-strongly convex over $\K_m$, and $\w_{m}^* \in \K_m$.
\item And,  $||\w_{m+1} - \w_m^*|| \leq  \delta_{m+1}/2$
\end{enumerate}
\end{lemma}

\begin{proof}
The proof is by induction. Let us prove it holds for $m=1$. Note that $\delta_1 =  \textrm{diam}(\K)/2$, therefore $\K_1=\K$, and also $\w_1^*\in\K_1$.
 Also recall that $\mukappa$-niceness of $F^+$ implies that $\FPhat_{\delta_1}$ is $\mukappa$-strongly convex in $\K$, thus
by Corollary~\ref{cor:CnvxRt}, after less than $\TFplus = \frac{2\ln{(\delta_m/(4 \textrm{diam}(\K_1))}}{\ln{(1-\eta(\mukappa))}}$ optimization steps of GradDescent, we will have:
$$\| \w_2 - \w_1^* \| \leq \delta_{2}/2 $$
which establishes the case of $m=1$.
Now assume that lemma holds for $m>1$. By this assumption, $\| \w_{m+1} - \w_m^* \| \leq \delta_{m+1}/2$,  $\FPhat_{\delta_m}$ is $\mukappa$-strongly convex in $\K_m$, and also  $\w_m^* \in \K_m$. 
$$\|\w_{m+1} - \w_m^* \|\leq  \frac{\delta_{m+1}}{2}$$
Combining the latter with the centering property of $\mukappa$-niceness yields:
\begin{align*}
\|\w_{m+1} - \w_{m+1}^* \|&\leq \|\w_{m+1} - \w_m^* \|+\| \w_m^* -\w_{m+1}^*\| \\
& \leq 1.5\delta_{m+1} 
\end{align*}
and it follows that,
$$\w_{m+1}^* \in B(\w_{m+1},1.5\delta_{m+1})\subset B({\w}_{m+1}^*,3\delta_{m+1})$$
 Recalling that $\K_{m+1}: = B(\w_{m+1},1.5\delta_{m+1})$, and the local strong convexity property of $f$ (which is $\mukappa$-nice), then the induction step for first part of the lemma holds. Now, by Corollary~\ref{cor:CnvxRt}, after less than $\TFplus = \tilde{\mathcal{O}}(\frac{2\ln{(\delta_m/(4 \textrm{diam}(\K_m))}}{\ln{(1-\eta(\mukappa))}}) $ optimization steps of GradDescent over $\FPhat_{\delta_{m+1}}$, we will have:
$$\| \w_{m+2} - \w_{m+1}^* \| \leq \delta_{m+2}/2$$
which establishes the induction step for the second part of the lemma.
\end{proof}



We are now ready to prove Theorem~\ref{thnvmNncnvxPrbblty}:
\begin{proof}[Proof of Theorem~\ref{thnvmNncnvxPrbblty}]
Algorithm~\ref{alg:generic} terminates after $M$ epochs meaning, $\delta_{M+1} =  \textrm{diam}(\K)/(1.5)^M$.
According to Lemma~\ref{lem:induction} the following holds every $\w\in \K$,
$$\|\w_{m+1} - \w_m^* \|\leq  \frac{\delta_{m+1}}{2}$$
We need to bound $\|\w_{M+1} - \w^* \|$

\begin{eqnarray}
\label{eqn:telescoping}
\|\w_{M+1} - \w^* \| &\leq& \|\w_{m+1} - \w_m^* \| + \|\w_{m+1}^* - \w_{m}^* \| + \|\w_{m+2}^* - \w_{m+1}^* \| + \ldots + \|\w_{\infty}^* - \w^* \| \nonumber \\
&\leq& \frac{\delta_{M+1}}{2} + \frac{\delta_{M+1}}{4} + \frac{\delta_{M+1}}{8} + \ldots \nonumber \\
&=& \frac{\delta_{M+1}}{2} \frac{1}{1 - \frac{1}{2}} \nonumber \\
&=& \delta_{M+1} = \frac{\delta_{1}}{(1.5)^M} = \eps \nonumber \\
\implies M &=& \frac{\ln{(\delta_{1}/\eps)}}{\ln{(1.5)}}
\end{eqnarray}

Now calculating $T_{total}$
\begin{eqnarray}
T_{total} &=& \sum_{m=1}^{M} T_{\Fplus_m} \nonumber \\
&=& \sum_{m=1}^{M} \frac{2\ln{(\delta_m/(4 \textrm{diam}(\K_m))}}{\ln{(1-\eta(\mukappa))}} \nonumber \\
&\leq& \sum_{m=1}^{M} \frac{2\ln{(\delta_m/(4(1.5)\delta_m)}}{\ln{(1-\eta(\mukappa))}} \nonumber \\
&=& \frac{2M\ln{(6)}}{\ln{(\frac{1}{1-\eta(\mukappa)})}} \nonumber \\
\implies T_{total} &\leq& \frac{2\ln{(\delta_{1}/\eps)}(\ln{(6)}/\ln{(1.5)})}{\ln{(\frac{1}{1-\eta(\mukappa)})}} \nonumber \\
&&\mbox{Above inequality using equation~\ref{eqn:telescoping}}. \nonumber \\
\implies T_{total} &\leq& \frac{2\ln{(\eps/\delta_{1})}(\ln{(6)}/\ln{(1.5)})}{\ln{(1-\eta(\mukappa))}} \nonumber \\
\implies \ln{\big(1-\eta(\mukappa)\big)} \frac{\ln{(1.5)T_{total}}}{2\ln{(6)}} &\geq& \ln{(\eps/\delta_{1})} \nonumber \\
&&\mbox{Above inequality by multiplying $\ln{\big(1-\eta(\mukappa)\big)}$, a negative quantity.} \nonumber \\
\implies \big(1-\eta(\mukappa)\big)^ {\frac{\ln{(1.5)T_{total}}}{2\ln{(6)}}} &\geq& \eps/\delta_{1} \nonumber \\
\implies \eps &\leq& \delta_{1}\big(1-\eta(\mukappa)\big)^ {\frac{\ln{(1.5)T_{total}}}{2\ln{(6)}}} \nonumber \\
||\w_{M+1} - \w^*|| &\leq& \delta_1(1-\eta\mukappa)^{\frac{T_{total}}{2(\ln{6}/\ln{1.5})}} \nonumber 
\end{eqnarray}

\end{proof}


\end{document}